%% file: main.tex
\documentclass{article} 
\usepackage{arxiv}
\input{math_commands.tex}

\usepackage{xcolor}
\definecolor{darkblue}{rgb}{0, 0.12, 0.55}
\definecolor{darkgreen}{rgb}{0, 0.55, 0.12}
\definecolor{darkred}{rgb}{0.6,0,0}
\definecolor{darkgreen}{rgb}{0,0.6,0}
\definecolor{Gray}{gray}{0.9}
\usepackage[breaklinks=true,
            colorlinks,
            linkcolor = darkred,
            urlcolor  = darkblue, 
            citecolor = darkgreen,
            bookmarks = false]{hyperref}
\usepackage{url}

\usepackage{graphicx}
\usepackage{wrapfig}
\usepackage{booktabs}
\usepackage{mathtools}
\usepackage{adjustbox}
\usepackage{caption}
\usepackage{booktabs}
\usepackage{multirow}
\usepackage{xspace}
\usepackage{array}
\usepackage{bm}
\usepackage{bbm}
\usepackage{amsthm,amsmath,amssymb}

\usepackage{listings}
\usepackage{wrapfig}
\usepackage{subcaption}
\usepackage{url}
\usepackage{colortbl}
\usepackage{adjustbox}

\usepackage{paralist}
\usepackage{natbib}

\usepackage{algorithm}
\usepackage{algorithmic}

\definecolor{mymauve}{rgb}{0.58,0,0.82}
\newcommand{\resolved}[1]{}

\newcommand{\com}[1]{}

\newlength{\mysize}

\usepackage[most]{tcolorbox}
\usepackage{xcolor}
\usepackage{fvextra} 
\usepackage{inconsolata} 

\definecolor{promptbg}{RGB}{245,245,245} 
\definecolor{prompttitlebg}{RGB}{70,70,70} 
\definecolor{prompttitlefg}{RGB}{255,255,255} 

\newtcolorbox{prompttitle}{
    colback=prompttitlebg,
    coltext=prompttitlefg,
    left=6pt, right=6pt, top=6pt, bottom=6pt,
    boxrule=0pt,
    enhanced,
    sharp corners,
}

\newtcolorbox{promptbox}{
    breakable,
    enhanced,
    colback=promptbg,
    colframe=black,
    boxrule=0.5pt,
    sharp corners,
    left=10pt,
    right=10pt,
    top=8pt,
    bottom=8pt,
    fontupper=\ttfamily\small,
}

\DefineVerbatimEnvironment{PromptVerb}{Verbatim}{
    breaklines=true,
    breakanywhere=true,
    breaksymbol={},       
    fontsize=\small,
}

\newtcolorbox{promptblock}[1]{%
    breakable,
    enhanced,
    colback=promptbg,            
    colframe=black,              
    colbacktitle=prompttitlebg,  
    coltitle=prompttitlefg,      
    title=#1,                    
    fonttitle=\bfseries\normalsize,   
    sharp corners,
    boxrule=0.5pt,
    left=10pt,
    right=10pt,
    top=8pt,
    bottom=8pt,
    fontupper=\ttfamily\small,   
}

\input{macro-math.tex}
\title{Walk the Talk: Bridging the Reasoning-Action Gap for Thinking with Images via Multimodal Agentic Policy Optimization}

\author{Wenhao Yang$^{1,2,3}$ \ \ Yu Xia$^{3}$ \ \   Jinlong Huang$^{3,4}$ \ \   Shiyin Lu$^{3}$ \ \   Qing-Guo Chen$^{3}$ \ \  Zhao Xu$^{3}$ \ \ Weihua Luo$^{3}$ \\
\textbf{Kaifu Zhang}$^{3}$ \ \ \textbf{Yuchen Zhou}$^{5,8}$ \ \ \textbf{Xiaobo Xia}$^{6}$ \ \ \textbf{Yuanyu Wan}$^{7}$ \ \ \textbf{Lijun Zhang}$^{1,2}$ \ \ \textbf{Tat-Seng Chua}$^{8}$ \\
$^1$National Key Laboratory for Novel Software Technology, Nanjing University, China \\
$^2$School of Artificial Intelligence, Nanjing University, China \quad 
$^3$ AI Business, Alibaba Group \\
$^4$ School of Automation and Intelligent Sensing, Shanghai Jiao Tong University, China \\
$^5$ School of Intelligent Systems Engineering, Sun Yat-sen University, Shenzhen, China \\
$^6$ School of Information Science and Technology, University of Science and Technology of China \\
$^7$ School of Software Technology, Zhejiang University, China \\
$^8$ School of Computing, National University of Singapore, Singapore, Singapore
}

\begin{document}

\maketitle

\begin{abstract}
Recent advancements in Multimodal Large Language Models (MLLMs) have incentivized models to ``\emph{think with images}'' by actively invoking visual tools during multi-turn reasoning. The common Reinforcement Learning (RL) practice of relying on outcome-based rewards ignores the fact that \emph{textual plausibility often masks executive failure}, meaning that models may exhibit intuitive textual reasoning while executing imprecise or irrelevant visual actions within their agentic reasoning trajectories. This reasoning-action discrepancy introduces noise that accumulates throughout the multi-turn reasoning process, severely degrading the model's multimodal reasoning capabilities and potentially leading to training collapse. In this paper, we introduce \textbf{M}ultimodal \textbf{A}gentic \textbf{P}olicy \textbf{O}ptimization (MAPO), bridging the gap between textual reasoning and visual actions generated by models within their Multimodal Chain-of-Thought (MCoT). Specifically, MAPO mandates the model to generate explicit textual descriptions for the visual content obtained via tool usage. We then employ a novel advantage estimation that couples the semantic alignment between these descriptions and the actual observations with the task reward. Theoretical findings are provided to justify the rationale behind MAPO, which inherently reduces the variance of  gradients, and extensive experiments demonstrate that our method achieves superior performance across multiple visual reasoning benchmarks.  
\end{abstract}

\section{Introduction}
\label{sec:intro}
The evolution of Multimodal Large Language Models (MLLMs) has demonstrated remarkable reasoning capabilities in visual understanding tasks~\citep{Qwen25-VL,seed15vl}. These models typically treat visual images as static inputs, processed by a vision encoder before being fed into the LM decoder. Consequently, the reasoning process remains heavily dominated by the language modality. To break away from this \emph{text-only} reasoning paradigm, recent studies have sought to incorporate visual information into the Chain-of-Thought (CoT) produced by models~\citep{NeurIPS:2024:Hu}. This trajectory is exemplified by frontier models like OpenAI o3/o4-mini~\citep{openaio3}, which pioneer the capability to ``\textit{think with images}'' within their internal reasoning chains. Instead of just seeing the images, the o3 model can automatically integrates visual tools, such as zooming and cropping, to analyze them more thoroughly. 

Recent methodologies for replicating o3-style ``thinking with images'' capabilities generally adopt an agentic framework, wherein MLLMs are trained as agents to engage in multi-turn reasoning~\citep{arXiv:2025:DeepEyes,arXiv:2025:Thyme,arXiv:2025:Minio3,arXiv:2025:DRIM}. As illustrated in the left panel of Figure~\ref{fig:intro}, given an input image and a question prompt, the MLLM agent follows a specific chat template to generate a textual reasoning (e.g., \textit{to identify the color, I need a closer look at the box}) accompanied by a visual action (e.g., \textit{\textless tool\_call\textgreater  name:zoom\_in\_tool, arguments:...\textless /tool\_call\textgreater}). Subsequently, the tool is executed to process the image, yielding an observation. This observation is iteratively fed back into the agent's context, driving a multi-turn reasoning loop until the final answer is derived. 

\begin{figure*}[t]
\centering
\includegraphics[width=.95\textwidth]{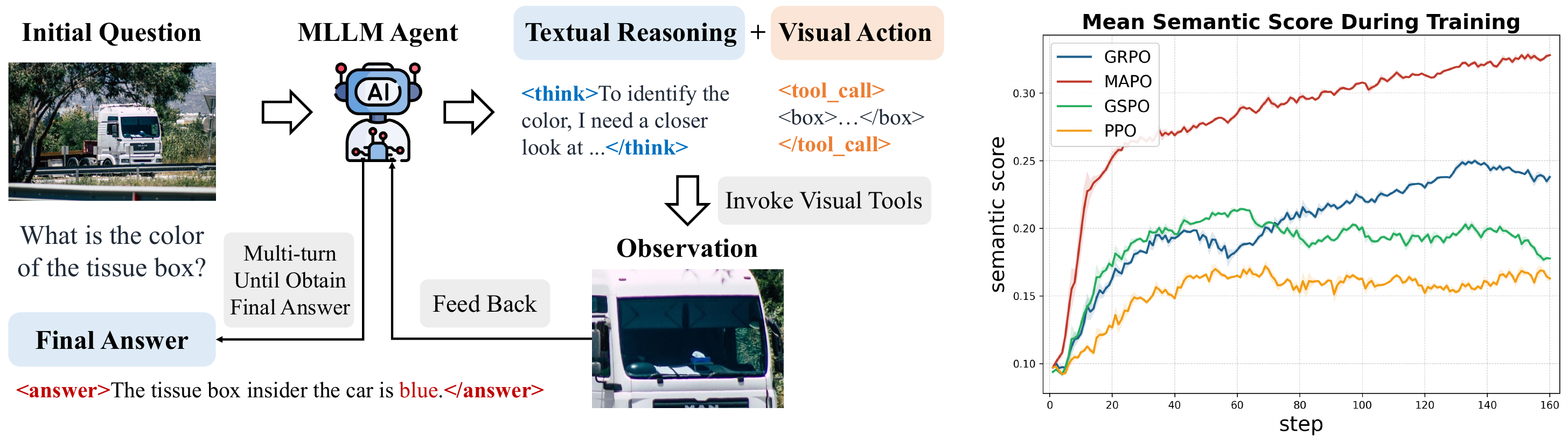}
\caption{\textbf{Left}: Overview of the agentic ``thinking with images'' framework where the MLLM agent executes visual actions guided by textual reasoning. \textbf{Right}: Training curves of the semantic score demonstrating that MAPO significantly outperforms other RL methods. }
\label{fig:intro}
\end{figure*}

Nowadays, driven by the remarkable efficacy of Reinforcement Learning (RL) in agentic reasoning, the field has converged on a standard training pipeline: cold-start Supervised Fine-Tuning (SFT) followed by end-to-end RL, where SFT is employed to equip the model with fundamental capabilities for tool invocation and multi-turn reasoning. In the RL phase, the model is trained to freely explore the policy space, generating reasoning trajectories. However, a critical limitation persists in these prevailing recipes, which predominantly rely on outcome-based reward signals derived solely from the correctness of the final answer, thereby lacking effective supervision for the intermediate agentic trajectory. Such sparse, outcome-centric signals are insufficient to guarantee \textit{cross-modal consistency} when the model thinks with images in its Multimodal CoT (MCoT). A model may generate a textually plausible rationale (e.g., \textit{claiming to examine a specific object}) while executing an irrelevant or imprecise visual action. If the model fortuitously guesses the correct answer despite this misalignment, the standard RL objective inadvertently reinforces this \emph{spurious correlation}. This \textbf{reasoning-action discrepancy} not only introduces significant noise into gradient estimation but also hinders the model from acquiring truly robust visual reasoning capabilities, as the policy is optimized without verifying whether the visual execution actually supports the textual thought. 

To bridge this reasoning-action gap, we propose \textbf{M}ultimodal \textbf{A}gentic \textbf{P}olicy \textbf{O}ptimization (MAPO), a simple yet effective approach that explicitly aligns visual execution with textual reasoning to ensure semantic consistency. Our approach materializes this alignment through a stealthy supervision signal embedded directly within the reasoning loop. First, we enable the model to ``\textit{talk}'' by prompting it to self-generate a descriptive label for the target visual content. Second, we compel the model to ``\textit{walk the talk}'' by leveraging a lightweight, pre-trained CLIP model to quantify the semantic alignment between the generated description and the actual visual observation. This design is highly computationally efficient, incurring negligible memory overhead. Furthermore, we innovatively incorporate a \emph{trajectory-aware discount factor} into the semantic score. This serves a dual advantage: it mitigates reward hacking by penalizing abnormal behaviors (too long trajectories), while offering a necessary tolerance window for the initial exploration phase (e.g., allowing for trial-and-error in early turns). Finally, these semantic signals are seamlessly integrated into the advantage estimation via group aggregation, guiding the policy update towards trajectories that are not only correct in outcome but also consistent in process. 

Extensive experiments across multiple visual understanding benchmarks demonstrate that MAPO achieves superior performance compared to existing baselines. As illustrated in the right panel of Figure~\ref{fig:intro}, our method yields significant improvements in semantic consistency, consistently outperforming other policy optimization methods like GRPO throughout the training process. This enhanced semantic alignment effectively empowers the model to ``walk the talk'' within MCoT reasoning, thereby achieving superior performance on fine-grained visual tasks. Furthermore, our approach exhibits remarkable scalability and training stability. Unlike standard RL methods that often suffer from collapse during prolonged training, MAPO maintains robust performance gains even over extended training steps, suggesting its potential for scaling up agentic reasoning in more complex, open-ended scenarios. 

Theoretically, we establish that MAPO operates as a \textbf{dual variance reduction} framework, providing a principled explanation for the enhanced training stability and superior performance observed in our experiments. Specifically, our analysis identifies two fundamental sources of gradient variance in multimodal agentic reasoning: (1) \textit{spatial sampling variance}, arising from the inherent complexity and diversity of visual prompts, and (2) \textit{signal variance}, stemming from the sparsity and noise of outcome-based rewards. We prove that the group-based advantage estimation in MAPO acts as a \textit{spatial control variate}, effectively filtering out the aleatoric uncertainty shared among responses conditioned on the same visual input, thereby reducing spatial variance by a factor governed by the intra-group reward correlation. Furthermore, we demonstrate that the semantic scoring mechanism provides a \textit{dense, process-aware supervision signal} that is more tightly coupled with the actual reasoning trajectory than sparse outcome rewards, thereby significantly lowering the conditional variance of the reward signal. Together, these two mechanisms complement each other: the former stabilizes gradient estimation across the prompt distribution, while the latter sharpens the reward signal within each trajectory. This dual reduction jointly yields lower-variance policy gradient estimates, which rigorously accounts for the faster convergence and resistance to training collapse that we observe empirically.

Our main contributions are summarized as follows: 
\begin{compactitem} 
\item \textbf{Methodological Innovation:} We propose Multimodal Agentic Policy Optimization, a simple yet but effective RL method that bridges the reasoning-action gap. We enforce semantic consistency between textual reasoning and visual actions without relying on expensive external annotations. 
\item \textbf{Theoretical Foundation:} We provide a theoretical analysis justifying MAPO's effectiveness. We prove that coupling semantic alignment with outcome-based advantage estimation induces a dual variance reduction effect, inherently stabilizing the policy gradient and accelerating convergence. 
\item \textbf{Empirical Superiority:} Extensive experiments validate that MAPO significantly outperforms state-of-the-art baselines on multiple visual reasoning benchmarks. Crucially, our method demonstrates superior scalability and training stability, effectively preventing collapse during long-horizon optimization. 
\end{compactitem}

\section{Related Work}
\paragraph{Multimodal LLMs.} The landscape of Multimodal Large Language Models (MLLMs) has evolved rapidly, with a surge of powerful open-source models demonstrating remarkable capabilities~\citep{llava-next,Qwen25-VL,Internvl3,Keye-VL,Ovis25,seed15vl}. Early efforts in multimodal learning primarily prioritized cross-modal alignment for perception-level tasks, such as image-text matching exemplified by the CLIP series~\citep{NeurIPS:2021:CLIP,ICML:2022:BLIP,ICML:2023:BLIP2,ICCV:2023:Siglip} and standard image captioning~\citep{Vinyals_2015_CVPR,pmlr-v37-xuc15}. Subsequent research has continuously refined the architectural designs of multimodal models, such as Flamingo~\citep{NeurIPS:2022:Flamingo} and LLaVA~\citep{NeurIPS:2023:LLaVA}. Driven by the breakthrough success of Large Language Models (LLMs)~\citep{GPT3,llama1,InstructGPT}, researchers have increasingly sought to extend the language modality to encompass a broader spectrum of other modalities. At the inception of this phase, many models typically bridge a pre-trained Vision Transformer (ViT) with a LLM via a simple projector, such as a MLP~\citep{LLaVA15,internvl15,qwenvl}. Following this, the field witnessed a proliferation of prominent open-source models dedicated to exploring and scaling up model intelligence. These efforts encompass architectural innovations as well as the substantial scaling of model parameters and data volume, as exemplified by the Qwen~\citep{qwen2vl,Qwen25-VL,Qwen3-VL}, InternVL~\citep{internvl,internvl25,Internvl3,internvl35}, Ovis~\citep{ovis,Ovis25}, and LLaVA series~\citep{Llava-onevision}. To further scale the reasoning capabilities of MLLMs for complex problems, researchers have incorporated the Chain-of-Thought (CoT) paradigm into the post-training stage. This adaptation has strengthened a broad range of multimodal tasks, including visual question answering~\citep{arXiv:2023:Zhang,ACL:2025:He,arXiv:2025:Shen} and mathmatical problem-solving~\citep{Mathvista}. 

\paragraph{Tool-Integrated Agentic Reasoning.} To overcome the limitations of static perception, a growing body of research has empowered LLMs with external tools, framing them as autonomous agents. Pioneering works like Toolformer~\citep{NeurIPS:2023:Toolformer} and ReAct~\citep{ICLR:2023:react} demonstrated the power of interleaving reasoning traces with action execution. In the visual domain, frameworks such as MM-ReAct~\citep{arXiv:2023:Yang} and VisProg~\citep{CVPR:2023:visprog} extend this agentic paradigm by equipping MLLMs with visual APIs (e.g., detection, segmentation). Recent advancements have shifted towards optimizing the policy of these agents via RL algorithms, such as REINFORCE~\citep{reinforce++}, PPO~\citep{PPO}, DAPO~\citep{dapo}, GRPO~\citep{GRPO} and GSPO~\citep{gspo}. Along this line of research, a plethora of tool-augmented agentic systems have emerged, including frontier models like OpenAI o3/o4-mini~\citep{openaio3} and Kimi-K2~\citep{kimi-k2}, web agents~\citep{webwatcher,websailor,webshaper}, and agents for mathematical problem-solving~\citep{mai2025agent,simpletir}. These agentic models have demonstrated exceptional multi-turn reasoning capabilities by leveraging tool invocation, such as web browsing and code execution. 

\paragraph{Thinking with Images.} Most of existing MLLMs typically treat images merely as static inputs, a paradigm termed \textit{thinking about images}, wherein the visual modality encodes the image into feature representations for subsequent reasoning by the language model decoder. Consequently, the thinking about images paradigm is inherently confined to text-only reasoning chains, often struggling to achieve optimal performance on complex visual tasks such as precise spatial manipulation~\citep{CVPR:2023:visprog} and long-horizon planning in interactive environments~\citep{2025explorer,wang2025ragen}. With the success of OpenAI o3~\citep{openaio3}, models have begun to evolve from merely seeing images towards a novel reasoning paradigm known as \textit{thinking with images}, as extensively elaborated in recent survey~\citep{survey:twi}. To integrate visual information into the Chain-of-Thought (CoT) reasoning process, current approaches primarily diverge into two main streams: \textit{Reasoning in Latent Space}~\citep{zhang2025latent,wang2025monet,li2025latent,dong2025interleaved} and \textit{Constructing Multimodal CoT}~\citep{NeurIPS:2024:Hu,arXiv:2025:DeepEyes}. The former advocates for internalizing the reasoning process to perform implicit simulation and deduction within continuous high-dimensional vector spaces, whereas the latter emphasizes explicitly generating or manipulating visual information, treating images as a dynamic scratchpad. 
In this work, we focus on the latter paradigm, specifically the explicit construction of MCoT reasoning trajectories. Pioneering works such as DeepEyes~\citep{arXiv:2025:DeepEyes,arXiv:2025:DeepEyesv2} and Thyme~\citep{arXiv:2025:Thyme} have developed agentic systems featuring interleaved multimodal CoT. The prevailing training pipeline typically comprises a cold-start SFT followed by RL, where the SFT stage instills multi-turn reasoning and tool utilization capabilities, while the RL stage facilitates autonomous exploration. Many subsequent works, such as Chain-of-Focus~\citep{arXiv:2025:Zhang:B}, Pixel Reasoner~\citep{arXiv:2025:Su} and others~\citep{huang2025high,wu2025mmsearch,yang2025visionthink,zhu2025active}, have also made valuable contributions toward eliciting more robust \textit{thinking with images} capabilities. These efforts encompass the design of more efficacious reward signals~\citep{guo2025thinking,arXiv:2025:DRIM}, masking mechanisms for excessively long trajectories~\cite{arXiv:2025:Minio3}, and the adaptation of these paradigms to specialized visual tasks like OCR~\citep{xu2025vacot}. Concurrently, RTWI~\citep{li2026reliable} addresses the ``Noisy Thinking'' problem of error propagation from imperfect mining and reasoning by employing text-centric reliability estimation.

\section{Our Approach}
In this section, we first articulate the problem formulation of this work and the technical challenge. Subsequently, we provide a detailed elaboration of our proposed method, named as \textbf{M}ultimodal \textbf{A}gentic \textbf{P}olicy \textbf{O}ptimization (MAPO).

\subsection{Problem Formulation and Technical Challenge}
Following the agentic RL framework established in prior studies~\citep{arXiv:2025:DeepEyes,arXiv:2025:Minio3,arXiv:2025:DRIM}, we formulate the reasoning process of thinking with images as a Markov Decision Process (MDP), incorporating additional observations derived from tool invocation results. At each step $t$, the state $s_t$ of MCoT is defined as:
\begin{equation*}
    s_t = \left\{ (X_0,I_0),(X_1,I_1),\cdots,(X_t,I_t) \right\},
\end{equation*}
where $(X_0,I_0)$ denotes the original image and question, $\mathbf{X}_{\leq t} = \{X_1,\cdots,X_t\}$ denotes the sequence of textual reasoning and visual action (e.g., crop and zoom-in) before step $t$, and $\mathbf{I}_{\leq t} = \{I_1,\cdots,I_t\}$ denotes the sequence of observations returned by the environment after executing the tool. Given the state $s_t$, the model generates the textual reasoning and visual action until the final answer is obtained. 

In the realm of multimodal agentic systems, a critical challenge is that \textit{sparse reward signals} fail to effectively guide MLLMs towards generating \emph{coherent and high-quality} reasoning trajectories. Most existing approaches aimed at incentivizing ``thinking with images'' lack step-level supervision for intermediate visual actions and textual reasoning, largely because determining the optimal action at each step a priori is intractable. Consequently, these methods predominantly rely on outcome-based reward signals. 

This reliance on sparse feedback often masks executive failures, where the model generates plausible textual rationales while performing imprecise or irrelevant visual actions. We term this misalignment as the \textbf{reasoning-action gap}, which fundamentally hinders the acquisition of robust agency.

\subsection{Trajectory-aware Supervision via Semantic Scoring} 
To bridge the reasoning-action gap, we introduce a mechanism that compels the MLLM agent to explicitly articulate its visual intent and verifies whether its execution aligns with this intent. This process transforms the implicit ``thought'' into an explicit, verifiable ``talking'' and ``walking'' loop. 

Standard methods typically generate a tool call action $a_t$ directly from the reasoning context. However, the semantic intent behind $a_t$ often remains latent and ambiguous. To make this intent verifiable, we mandate the model to generate a concise descriptive label $y_t$ alongside the action, explicitly describing what it expects to see. 

Formally, at each step $t$, the policy $\pi_\theta$ generates $(a_t, y_t) \sim \pi_\theta(\cdot | s_t)$, where $s_t$ denotes the current state. This explicit label serves as the semantic anchor for the subsequent verification. Upon executing action $a_t$, the environment returns a visual observation $I_t$ (e.g., a cropped image). To quantify the consistency between the agent's expectation $y_t$ and the actual reality $I_t$, we employ a lightweight, pre-trained CLIP model as a semantic verifier. The raw semantic score $z_t$ is calculated as the cosine similarity between the embeddings of the label and the image, i.e., 
\begin{equation}\label{eqn:clip}
    z_t = \text{CLIP}(y_t, I_t)
\end{equation}
This score provides a dense supervision signal, quantifying the alignment between the text description and the visual content. Furthermore, this verification incurs negligible computational overhead compared to the MLLM inference. 

\begin{figure*}[t]
\centering
\includegraphics[width=.95\textwidth]{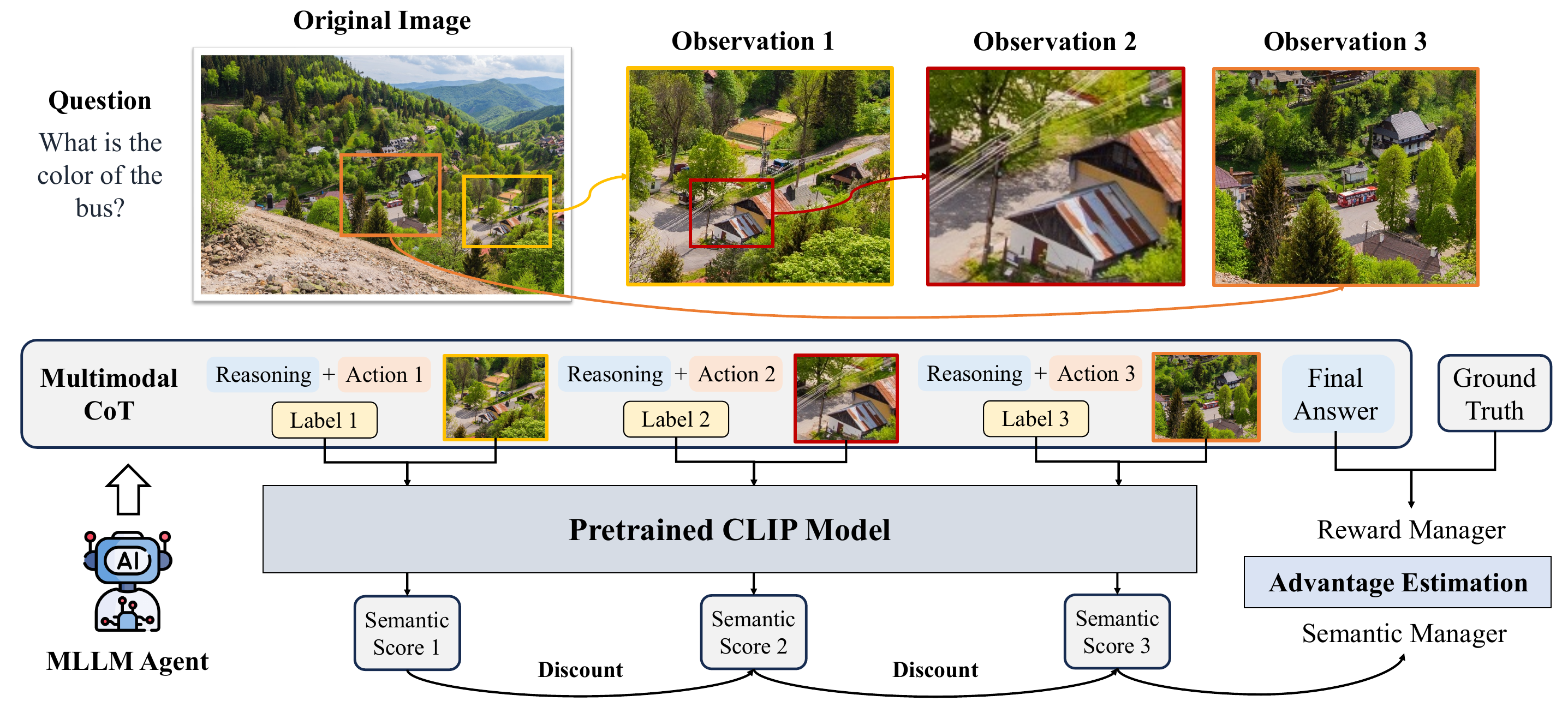}
\caption{\textbf{Overview of MAPO.} Our method bridges the reasoning-action gap by using a CLIP model to measure the semantic alignment between self-generated labels and observations. These signals are integrated into the advantage estimation to enforce process consistency.}
\label{fig:method}
\end{figure*}

While $z_t$ measures local consistency, simply summing these scores may incentivize the agent to prolong the trajectory to accumulate rewards (reward hacking) or prematurely penalize necessary trial-and-error behaviors during early exploration. To address this, we introduce a \emph{trajectory-aware discount factor} $\lambda$ to modulate the semantic score. The final semantic score for the whole trajectory is defined as:
\begin{equation}\label{eqn:sem}
R_{\text{sem}} = \frac{1}{T}\sum_{t=1}^T \lambda^{T-t} z_t,
\end{equation} 
where $T$ is the number of tool calls. It provides a tolerance window in the initial steps to encourage exploration and imposes a stricter penalty (or decay) as the trajectory length~$T$ increases, thereby discouraging inefficient redundancy.

\begin{algorithm}[t]
   \caption{Multimodal Agentic Policy Optimization}
   \label{alg:mapo}
\begin{algorithmic}[1]
\REQUIRE Multimodal Dataset $\mathcal{D}$, MLLM Policy $\pi_\theta$, CLIP Model $\mathcal{M}_{\text{clip}}$, Group Size $G$, Balance Coefficient $\beta$ 
\STATE Initialize policy parameters $\theta$ 
\WHILE{not converged}
    \STATE Sample a batch of query prompts $Q \sim \mathcal{D}$
    \FOR{each question $q \in Q$}
       \STATE \textcolor{gray}{// Phase 1: Group Sampling and Semantic Scoring}
    \STATE Sample $G$ trajectories $\{\tau_1, \dots, \tau_G\}$ from current policy $\pi_{\theta_{old}}$
        \FOR{each trajectory $\tau_i$ in group}
          \STATE Initialize semantic reward sum $R_{\text{sem}}^{(i)} \leftarrow 0$
            \FOR{step $t = 1, \dots, T$}
              \STATE Generate $(a_t, y_t) \sim \pi_\theta(\cdot | s_t)$
               \STATE Execute $a_t$ to obtain visual observation $I_t$      
               \STATE Calculate step-wise semantic score in \eqref{eqn:clip}
        \ENDFOR
        \STATE \textcolor{gray}{// Phase 2: Signals for Trajectory Supervision}
        \STATE Calculate the semantic score $R_{\text{sem}}^{(i)}$ in \eqref{eqn:sem} for the whole trajectory
        \STATE Obtain binary outcome reward $R_{\text{out}}^{(i)} \in \{0, 1\}$ based on the final answer
      \ENDFOR
    \ENDFOR
\STATE \textcolor{gray}{// Phase 3: Policy Update}
    \STATE Update $\theta$ by maximizing the MAPO objective in \eqref{eqn:MAPO}
\ENDWHILE
\end{algorithmic}
\end{algorithm}

\subsection{Multimodal Agentic Policy Optimization} 
Figure~\ref{fig:method} provides an overview of MAPO. The process commences with the MLLM agent generating a chain of reasoning, where each step explicitly interleaves a textual rationale, a tool-use action, and a descriptive label. To ensure the validity of these actions, a pre-trained CLIP model acts as a semantic verifier, calculating the alignment between the generated label and the actual visual observation. These step-wise semantic scores are subsequently modulated by a trajectory-aware factor and integrated with the task outcome. Specifically, MAPO optimizes the following objective: 
\begin{equation}\label{eqn:MAPO}
\begin{aligned}
    &\mathcal{J}_{\text{MAPO}} (\theta) = \mathbb{E}_{q\sim \mathcal{D},\{\tau_i\}_{i=1}^G \sim \pi_{\theta_{old}}(\cdot | q)} \\
     & \left[ \frac{1}{G} \sum_{i=1}^G\frac{1}{\vert\tau_i\vert} \sum_{\mathcal{I}=1}^{\vert\tau_i\vert} \min \left( w^{(i)}_{\mathcal{I}} (\theta) \widetilde{A}^{(i)}_{\mathcal{I}} ,\text{clip} (w^{(i)}_{\mathcal{I}} (\theta),\epsilon)\widetilde{A}^{(i)}_{\mathcal{I}} \right)  \right]
\end{aligned}
\end{equation}
where $G$ is the number of generated responses to each query $q$ (i.e., group size), $w^{(i)}_{\mathcal{I}}$ is the importance ratio (token-level GRPO or sequence-level GSPO), $\text{clip}(\cdot,\epsilon)$ constrains the value to $[1-\epsilon,1+\epsilon]$, and advantage $\widetilde{A}^{(i)}_{\mathcal{I}}$ is defined as: 
\begin{equation*}
        \widetilde{A}^{(i)}_{\mathcal{I}} =  \widetilde{A}^{(i)} = \widehat{A}^{(i)}_{\text{out}} + \beta * \widehat{A}^{(i)}_{\text{sem}}, 
\end{equation*}
where $\widehat{A}^{(i)}_{\text{out}}$ and $\widehat{A}^{(i)}_{\text{sem}}$ are the group normalization of the outcome reward $R^{(i)}_{\text{out}}$ and the semantic score $R^{(i)}_{\text{sem}}$, respectively. 

The proposed MAPO is summarized in Algorithm~\ref{alg:mapo}. The training procedure proceeds in three distinct phases to iteratively refine the policy. First, in the Group Sampling and Semantic Scoring phase (Lines 4-13), the model samples a group of $G$ trajectories for each query. Crucially, during the generation of each step $t$, the policy is mandated to explicitly output a descriptive label $y_t$ alongside the action $a_t$, which is immediately scrutinized against the real-time visual observation $I_t$ via the CLIP verifier in~\eqref{eqn:clip}. Second, in the Trajectory Supervision phase (Lines 14-17), these step-wise signals are aggregated using the trajectory-aware discount factor $\lambda$ to construct the holistic semantic score $R_{\text{sem}}$ in~\eqref{eqn:sem}, which is subsequently paired with the binary outcome reward $R_{\text{out}}$. Finally, in the Policy Update phase (Lines 19-20), the model computes the group-normalized advantages $\widetilde{A}^{(i)}$ based on these composite signals and updates the parameters $\theta$ by maximizing the objective $\mathcal{J}_{\text{MAPO}}$ in~\eqref{eqn:MAPO}, thereby reinforcing trajectories that exhibit both high semantic consistency and factual correctness. 

\paragraph{Extension to General Agentic Systems.} 
While this work primarily instantiates MAPO within the context of visual exploration tools (e.g., crop and zoom-in), the underlying framework is inherently model-agnostic and extensible to a broader spectrum of tool-integrated agentic systems. The core philosophy of MAPO, explicitly aligning ``thought'' (reasoning) with ``action'' (execution) via semantic scoring, is not confined to visual perception tasks. By simply adapting the prompt template to mandate a descriptive intent label before any tool invocation, MAPO can be generalized to complex environments involving APIs, code interpreters, or web browsing. For instance, in a web-browsing agent, the model could be prompted to describe the expected content of a webpage before clicking a link; a text-matching model (similar to CLIP) could then verify the semantic alignment between the expectation and the rendered page. Thus, MAPO offers a versatile paradigm for enforcing process consistency across diverse multimodal agentic applications.

\paragraph{Cognitive Grounding.} Conceptually, our idea draws inspiration from the cognitive principle of the unity of knowing and doing. In human visual thinking, genuine understanding is manifested when perceptual actions are strictly governed by semantic intent. We argue that for MLLMs to evolve towards human-level intelligence, achieving this cognitive consistency is not merely optional but imperative. By enforcing the model's visual execution to faithfully reflect its internal reasoning, MAPO aims to transcend mere trajectory optimization and truly unlock the reasoning boundaries of multimodal intelligence.

\paragraph{Implementation Details.} We implement our method based on verl~\citep{verl} that supports visual tool invocation. Regarding the hyperparameter configuration for MAPO, we set $\lambda=0.95$ and $\beta=0.4$. Additionally, we impose a length constraint on the model-generated descriptive labels to prevent reward hacking.   More training details during the RL stage are deferred to Appendix~\ref{app:train}. 

\section{Theoretical Analysis}\label{sec:theory}
In this section, we provide a theoretical justification for MAPO. We identify that the primary challenge in multimodal agentic reasoning is the high variance in gradient estimation during the long-horizon trajectory, which stems from two sources: (1) \textit{spatial sampling variance} due to the complexity of visual prompts, and (2) \textit{signal variance} due to the sparsity and noise of outcome-based rewards.

We formally show that MAPO achieves superior stability through a \textbf{Dual Variance Reduction} mechanism, effectively acting as a spatial analogue to momentum-based optimization while simultaneously denoising the reward signal.

\subsection{Spatial Variance Reduction via Group Aggregation}

Standard RL methods often suffer from high variance when estimating the policy gradient $\hat{g} = \nabla_\theta \log \pi_\theta(y|x) \cdot r(y)$ using mini-batch samples as an approximation of the true gradient $\nabla J(\theta)$. MAPO employs a group of $G$ outputs $\{y_1, \dots, y_G\}$ generated from the same query prompt $q$. The advantage is estimated as $\hat{A}_i = r_i - \bar{r}$, where $\bar{r} = \frac{1}{G}\sum_{j} r_j$.

\begin{proposition}[Spatial Variance Reduction]
\label{prop:spatial_variance}
Let $\rho$ be the correlation coefficient between the rewards of any two responses $y_i, y_j$ generated from the same prompt $q$. The variance of the group-based gradient estimator satisfies:
\begin{equation*}
    \textnormal{Var}(\hat{g}_{\textnormal{group}}) \approx (1 - \rho) \cdot \textnormal{Var}(\hat{g}_{\textnormal{std}}) + \mathcal{O}\left(\frac{1}{G}\right)
\end{equation*}
\end{proposition}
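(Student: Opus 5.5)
We treat the statement as a heuristic variance calculation. First we fix a simplified model that makes it precise. Condition on the prompt $q$ and let $r_i = \mu(q) + \epsilon_i$, where $\mu(q)$ is the prompt-level component shared by all responses to $q$. Assume exchangeability: $\mathrm{Var}(r_i)=\sigma^2$ and $\mathrm{Cov}(r_i,r_j)=\rho\sigma^2$ for $i\neq j$. Here the correlation $\rho$ is understood with respect to the joint randomness of $q$ and the responses, so $\rho\sigma^2 = \mathrm{Var}_q(\mu(q))$. We also treat the score function $s_i=\nabla_\theta\log\pi_\theta(y_i|q)$ as approximately uncorrelated with the reward fluctuations, up to second order. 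The quantities to compare are
\begin{equation*}
\hat g_{\mathrm{std}} = \frac{1}{G}\sum_i s_i r_i, \qquad \hat g_{\mathrm{group}} = \frac{1}{G}\sum_i s_i (r_i-\bar r).
\end{equation*}

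The key algebraic step is to compute the covariance structure of the centered rewards. We have $\mathrm{Var}(r_i-\bar r) = \sigma^2(1-\rho)(1-1/G)$. For $i\neq j$, $\mathrm{Cov}(r_i-\bar r, r_j-\bar r) = -\sigma^2(1-\rho)/G$. So centering removes exactly the shared component $\mu(q)$: the variance $\rho\sigma^2$ that all responses to the same visual input share disappears. What remains is the idiosyncratic part $(1-\rho)\sigma^2$, plus terms of order $1/G$.

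Next we plug this into the gradient variance. Expanding $\mathrm{Var}(\hat g)$ into a sum over $i$ and over pairs $i\neq j$ gives two kinds of terms. The diagonal terms involve $\mathbb{E}\|s_i\|^2\,\mathrm{Var}(\text{reward})$. The cross terms involve $\mathbb{E}[s_i^\top s_j]$ times the reward covariance. Since $\mathbb{E}[s_i\mid q]=0$ and the $y_i$ are conditionally independent given $q$, the cross score terms vanish at leading order. The variance therefore reduces to a multiple of the per-sample reward variance. Taking the ratio of the group version to the standard version gives $(1-\rho)(1-1/G)$ times the standard variance. Collecting the $-\rho/G$ pieces and the bias from dependence of $\bar r$ on $r_i$ (the self-term $r_i/G$) yields the $\mathcal{O}(1/G)$ remainder. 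An alternative route gives the same conclusion: write $\hat g_{\mathrm{group}}$ as $\hat g_{\mathrm{std}}$ minus a control variate $\bar r\cdot\frac1G\sum_i s_i$, and apply the standard control-variate identity. With the optimal coefficient, the variance reduction factor is $1-\rho^2_{\mathrm{eff}}$; under our exchangeable model this matches $1-\rho$ up to $\mathcal{O}(1/G)$.

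The main obstacle is the decoupling assumption between $s_i$ and the reward, that is, $\mathbb{E}[\|s_i\|^2 r_i^2]\approx \mathbb{E}\|s_i\|^2\,\mathbb{E}r_i^2$. This is what lets the variance factor through the reward covariance. Without it, the exact statement is false in general, which is why the proposition is stated with $\approx$. The plan is to state this as an explicit modeling assumption: bounded scores that are independent of the reward residual given $q$. Under it, the identity holds with an explicit remainder of at most $C\sigma^2/G$, where $C$ bounds $\mathbb{E}\|s\|^2$.
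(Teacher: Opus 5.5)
Your reward-level computation is exactly the paper's entire proof. It uses the same exchangeable model ($\mathrm{Var}(r_i)=\sigma^2$, $\mathrm{Cov}(r_i,r_j)=\rho\sigma^2$) and reaches the same result $\mathrm{Var}(r_i-\bar r)=\frac{G-1}{G}(1-\rho)\sigma^2$. The paper then just asserts that the no-baseline gradient variance is ``proportional to $\sigma^2$'' and lets $G$ grow. The problems are in the extra layer you add to make that transfer rigorous with an explicit $C\sigma^2/G$ remainder.

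First, the cross terms do not vanish because $\mathbb{E}[s_i\mid q]=0$. Write $r=\mu(q)+\epsilon$ and $g(q)=\mathbb{E}[s\,r\mid q]=\mathbb{E}[s\,\epsilon\mid q]$. Conditional independence then gives $\mathbb{E}[s_i^\top s_j r_i r_j]=\mathbb{E}_q\|g(q)\|^2$ for $i\neq j$. The score identity only kills the pieces carrying $\mu(q)$, not the $\epsilon_i\epsilon_j$ piece. After subtracting the squared mean, these terms leave a between-prompt contribution: $\mathrm{Var}_q(g(q))$ in $\mathrm{Var}(\hat g_{\mathrm{std}})$, and $(1-1/G)^2\mathrm{Var}_q(g(q))$ in $\mathrm{Var}(\hat g_{\mathrm{group}})$, since $\mathbb{E}[\hat g_{\mathrm{group}}\mid q]=(1-1/G)g(q)$. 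This contribution is not scaled by $1-\rho$. So $\mathrm{Var}(\hat g_{\mathrm{group}})-(1-\rho)\mathrm{Var}(\hat g_{\mathrm{std}})$ contains roughly $\rho\,\mathrm{Var}_q(g(q))$, which does not decay in $G$. Meanwhile, with your $\frac1G\sum_i$ normalization, the diagonal terms you are comparing are themselves only $O(1/G)$. Dropping the cross terms needs a separate low signal-to-noise assumption such as $\mathbb{E}_q\|g(q)\|^2\ll\mathbb{E}\|s\|^2\sigma^2/G$. That is not an $O(1/G)$ effect, and it gets harder to satisfy as $G$ grows.

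Second, the modeling assumption you propose for the rigorous version forces the estimand to vanish. If scores are independent of, or even merely uncorrelated with, the reward residual given $q$, then $g(q)=\mathbb{E}[s\mid q]\,\mathbb{E}[\epsilon\mid q]=0$. The policy gradient you are estimating is then identically zero. The only tenable version is a second-moment factorization such as $\mathbb{E}[\|s\|^2 r^2\mid q]\approx\mathbb{E}[\|s\|^2\mid q]\,\mathbb{E}[r^2\mid q]$, which is compatible with $g(q)\neq0$.

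Third, even on the diagonal the ratio is not $(1-\rho)(1-1/G)$. Under that factorization, the no-baseline term involves the second moment $\mathbb{E}[r^2]=\sigma^2+m^2$ with $m=\mathbb{E}[r]$, not $\sigma^2$, because the group baseline removes all of $\mu(q)$, including its mean. The ratio is therefore $(1-\rho)(1-1/G)\,\sigma^2/(\sigma^2+m^2)$; for binary rewards this is $(1-\rho)(1-1/G)(1-m)$. Your control-variate route gives the same thing: the optimal coefficient is indeed $1$, but the gradient-level squared correlation is about $(m^2+\rho\sigma^2)/(m^2+\sigma^2)$, not $\rho$. The qualitative conclusion survives, since the reduction is at least as large as claimed, but the identity with remainder at most $C\sigma^2/G$ does not.

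A clean way to recover what the paper actually argues:
\begin{enumerate}
\item Compare the per-sample terms $s_ir_i$ and $s_i(r_i-\bar r)$; this involves no cross terms.
\item Assume only the second-moment factorization, plus $\|\mathbb{E}_q g(q)\|^2\ll\mathbb{E}\|s\|^2\sigma^2$.
\item Either center the rewards with a global constant baseline or carry the $m^2$ term explicitly.
\end{enumerate}
You then get $(1-\rho)(1-1/G)$ at the paper's heuristic level, without the overclaim.
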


\paragraph{Remark}
The scaling factor $(1 - \rho)$ serves as a quantitative measure of stability gain. In the context of MLLMs, the intra-group correlation $\rho$ is typically high because all candidates $\{y_i\}_{i=1}^G$ are conditioned on the exact same visual input and query. Consequently, the group mean $\bar{r}$ acts as a robust \textit{spatial control variate}. By subtracting this baseline, MAPO effectively filters out the \textit{aleatoric uncertainty} associated with the inherent difficulty of a specific visual prompt (which affects all group members equally), allowing the gradient to focus purely on the \textit{epistemic variations} arising from the  reasoning trajectory. This can be viewed as a ``spatial'' counterpart to momentum-based optimization, stabilizing the update direction across the prompt distribution. 

\begin{proof}[Proof Sketch]
The group mean $\bar{r}$ serves as a \textit{control variate}. Since each sample rollout and the group baseline $\bar{r}$ share the exact same visual context $\mathcal{I}$ and prompt $x$, they are highly correlated (captured by $\rho$). Leveraging the variance decomposition $\text{Var}(X-Y) = \text{Var}(X) + \text{Var}(Y) - 2\text{Cov}(X,Y)$, the strong covariance term significantly negates the variance inherent to the prompt difficulty.
This mechanism is theoretically isomorphic to the variance reduction technique in STORM optimizer~\citep{NeurIPS:2019:STORM}, where historical gradients serve as temporal control variates. Here, MAPO implements a \textit{spatial} momentum via group aggregation.
\end{proof}

\subsection{Signal Variance Reduction via Semantic Scoring}

While group-based methods like GRPO reduce sampling variance, they typically relies on binary outcome rewards $r_{\text{out}}$, which are noisy proxies for the true reasoning quality $Q^*(x, z)$. We demonstrate that incorporating the Semantic Score ($r_{\text{sem}}$) is necessary to mitigate signal noise.

\begin{proposition}[Signal Variance Reduction]
\label{prop:signal_variance}
Let $\hat{g}_{\textnormal{sem}}$ be the gradient estimator using the dense semantic score $r_{\textnormal{sem}}$. By the Law of Total Variance, $\textnormal{Var}(\hat{g}) = \mathbb{E}_{\tau}[\textnormal{Var}_{r}(\hat{g}|\tau)] + \textnormal{Var}_{\tau}(\mathbb{E}_{r}[\hat{g}|\tau])$.
Since $r_{\textnormal{sem}}$ provides dense supervision grounded in the reasoning process $z$, its conditional variance is strictly lower than that of the sparse outcome reward: $\textnormal{Var}(r_{\textnormal{sem}}|\tau) \ll \textnormal{Var}(r_{\textnormal{out}}|\tau)$. Consequently:
\begin{equation*}
    \textnormal{Var}(\hat{g}_{\textnormal{sem}}) < \textnormal{Var}(\hat{g}_{\textnormal{out}})
\end{equation*}
\end{proposition}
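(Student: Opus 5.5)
We plan to make the statement precise enough to prove, then read it off from the law of total variance stated in the proposition. Fix a query $q$. Write the single-trajectory estimator as $\hat{g} = \psi(\tau)\, r$ with score function $\psi(\tau) = \nabla_\theta \log \pi_\theta(\tau|q)$, and take $r \in \{r_{\text{out}}, r_{\text{sem}}\}$. We treat $r$ as a random signal given the trajectory, that is, as noise on top of the latent quality $Q^*(x,z)$. Concretely, we posit $r = \mu(\tau) + \xi$ with $\mathbb{E}[\xi|\tau]=0$ and $\text{Var}(\xi|\tau)=\sigma_r^2(\tau)$. We also require both signals to be calibrated to the same conditional mean $\mu(\tau)$; this can be arranged after group normalization and the affine rescaling implicit in $\widehat{A}_{\text{sem}}$. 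This calibration assumption is what makes the comparison meaningful, and we state it explicitly.

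First, we apply the decomposition $\text{Var}(\hat{g}) = \mathbb{E}_\tau[\text{Var}_r(\hat{g}|\tau)] + \text{Var}_\tau(\mathbb{E}_r[\hat{g}|\tau])$ coordinatewise, or via the trace for vector-valued gradients. Second, conditional on $\tau$, the factor $\psi(\tau)$ is deterministic. Hence the inner term is $\text{Var}_r(\hat{g}|\tau) = \|\psi(\tau)\|^2 \sigma_r^2(\tau)$, and the outer term is $\text{Var}_\tau(\psi(\tau)\mu(\tau))$. Under the calibration assumption the outer term is identical for $r_{\text{sem}}$ and $r_{\text{out}}$. Third, we invoke the hypothesis $\text{Var}(r_{\text{sem}}|\tau) < \text{Var}(r_{\text{out}}|\tau)$ pointwise. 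This yields $\mathbb{E}_\tau[\|\psi\|^2\sigma_{\text{sem}}^2] < \mathbb{E}_\tau[\|\psi\|^2\sigma_{\text{out}}^2]$, and the inequality is strict provided $\psi \neq 0$ on a set of positive probability. Adding the equal outer terms gives the claimed strict inequality.

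For the combined MAPO signal $r_{\text{out}} + \beta r_{\text{sem}}$, we will also include a short corollary-style remark. There, we bound the conditional variance by $\sigma_{\text{out}}^2 + \beta^2\sigma_{\text{sem}}^2 + 2\beta\,\text{Cov}$. We would note that whether this falls below $\sigma_{\text{out}}^2$ after renormalizing by $(1+\beta)$ depends on the covariance between the two signals. For this reason the precise statement is made for $\hat{g}_{\text{sem}}$, as in the proposition.

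The main obstacle is justifying the premise rather than the algebra. The binary reward has $\text{Var}(r_{\text{out}}|\tau)=p(\tau)(1-p(\tau))$, which is maximal and equal to $1/4$ at ambiguous trajectories. We would first try to argue that $r_{\text{sem}}$ is an average of $T$ discounted CLIP scores $z_t \in [-1,1]$ with bounded, weakly dependent noise, which gives conditional variance $O(1/T)$ times the per-step noise. However, equal means is a genuine modeling assumption, and the statement should be presented as holding under it.
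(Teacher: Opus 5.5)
Your proposal is correct and follows the paper's own proof: the law of total variance conditioned on $\tau$, the inner term $\|\nabla_\theta\log\pi(\tau)\|^2\,\text{Var}(r\mid\tau)$ compared via the premise, and the outer term matched by calibration. Your explicit equal-conditional-mean assumption is sharper than the paper's wording, since the paper only asserts ``comparable'' outer terms under proportional means, which does not by itself give a strict inequality; equal means is in fact what the paper's additive model $r=Q^*(x,z)+\epsilon$ implies. One side correction: your corollary is more pessimistic than necessary, since $\text{Cov}(r_{\text{out}},r_{\text{sem}}\mid\tau)\le\sigma_{\text{out}}\sigma_{\text{sem}}$ gives $\text{Var}\bigl(\tfrac{r_{\text{out}}+\beta r_{\text{sem}}}{1+\beta}\mid\tau\bigr)\le\bigl(\tfrac{\sigma_{\text{out}}+\beta\sigma_{\text{sem}}}{1+\beta}\bigr)^2<\sigma_{\text{out}}^2$, so under your assumptions the combined MAPO signal also gets the strict reduction, whatever the covariance.
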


\paragraph{Remark} 
Proposition \ref{prop:signal_variance} indicates that the semantic score acts as a low-pass filter, smoothing out the high-frequency noise inherent in sparse outcome rewards. 

\subsection{Convergence Rate Analysis}

Finally, we connect the reduced variance to the convergence properties of the optimization process. We adopt standard assumptions for non-convex optimization~\citep{NeurIPS:2019:STORM}: (1) $L$-smoothness of the objective $J(\theta)$, and (2) bounded variance of the gradient estimator $\sigma^2$.

\begin{theorem}[Convergence Bound of MAPO]
\label{thm:convergence}
Let $\Delta = J(\theta^*) - J(\theta_0)$ be the initial optimality gap. For SGD optimization with a decaying step size over $T$ iterations, the expected gradient norm satisfies:
\begin{equation}
    \min_{0 \le t < T} \mathbb{E}\left[ \| \nabla J(\theta_t) \|^2 \right] \le \frac{C_1 \cdot L \Delta}{\sqrt{T}} + \frac{C_2 \cdot L \sigma^2_{\textnormal{MAPO}}}{\sqrt{T}}
\end{equation}
where $C_1, C_2$ are constants, and $\sigma_{\textnormal{MAPO}}$ is the standard deviation of the MAPO gradient estimator.
\end{theorem}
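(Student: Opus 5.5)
The argument is the standard non-convex SGD analysis of Ghadimi--Lan type, with the MAPO estimator in place of the generic stochastic gradient. Since MAPO \emph{maximizes} $J$, I write the update as gradient ascent, $\theta_{t+1} = \theta_t + \eta_t g_t$, where $g_t$ is the MAPO gradient estimator. I assume it is unbiased, $\mathbb{E}[g_t\mid\theta_t] = \nabla J(\theta_t)$, with $\mathbb{E}\|g_t - \nabla J(\theta_t)\|^2 \le \sigma^2_{\textnormal{MAPO}}$. Unbiasedness holds for the group-baseline estimator up to the usual leave-one-out correction, and I take it as an assumption.

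\textbf{Step 1: one-step ascent inequality.} $L$-smoothness gives
\begin{equation*}
J(\theta_{t+1}) \ge J(\theta_t) + \eta_t \langle \nabla J(\theta_t), g_t\rangle - \tfrac{L\eta_t^2}{2}\|g_t\|^2 .
\end{equation*}
Taking conditional expectations and using $\mathbb{E}\|g_t\|^2 \le \|\nabla J(\theta_t)\|^2 + \sigma^2_{\textnormal{MAPO}}$ yields
\begin{equation*}
\mathbb{E} J(\theta_{t+1}) \ge \mathbb{E} J(\theta_t) + \eta_t\bigl(1 - \tfrac{L\eta_t}{2}\bigr)\mathbb{E}\|\nabla J(\theta_t)\|^2 - \tfrac{L\eta_t^2}{2}\sigma^2_{\textnormal{MAPO}} .
\end{equation*}
For $\eta_t \le 1/L$ the coefficient $1 - L\eta_t/2$ is at least $1/2$.

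\textbf{Step 2: telescoping.} Summing over $t = 0,\dots,T-1$ and bounding $J(\theta_T) \le J(\theta^*)$ gives
\begin{equation*}
\tfrac12 \sum_t \eta_t\, \mathbb{E}\|\nabla J(\theta_t)\|^2 \le \Delta + \tfrac{L\sigma^2_{\textnormal{MAPO}}}{2}\sum_t \eta_t^2 .
\end{equation*}
I lower-bound the left side by $\bigl(\min_t \mathbb{E}\|\nabla J(\theta_t)\|^2\bigr)\sum_t \eta_t$.

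\textbf{Step 3: step size.} I use $\eta_t = \eta = \frac{c}{L\sqrt{T}}$ with $c \le 1$. This is "decaying" in the sense that it shrinks with the horizon $T$; I would note that the anytime choice $\eta_t \propto 1/\sqrt{t+1}$ gives the same bound up to a $\log T$ factor. With this choice, $\sum_t \eta_t = c\sqrt{T}/L$ and $\sum_t \eta_t^2 = c^2/L^2$. Dividing through gives
\begin{equation*}
\min_t \mathbb{E}\|\nabla J(\theta_t)\|^2 \le \frac{2L\Delta}{c\sqrt{T}} + \frac{c\,\sigma^2_{\textnormal{MAPO}}}{\sqrt{T}} .
\end{equation*}
This matches the stated form with $C_1 = 2/c$ and $C_2 = c/L$. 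To make the $L$ prefactor on the variance term literal, I instead parametrize $\eta = c/\sqrt{T}$ under the convention $L \ge 1$ (or by rescaling $c$). The constants are absorbed as the statement allows.

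\textbf{Main obstacle.} The delicate point is not the algebra but the assumptions on $g_t$. The MAPO estimator uses group-normalized advantages (division by the group standard deviation) and clipped importance ratios, so it is neither exactly unbiased nor an exact gradient of $J$.

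My first approach is to treat the estimator as the gradient of the surrogate objective \eqref{eqn:MAPO} at $\theta = \theta_{\textnormal{old}}$, where the clipping is inactive. I then either absorb the normalization bias into an effective step size, since the normalization factor is bounded once the group variance is bounded below, or carry a bias term $b$ through Step 1. A bias term would add $\mathcal{O}(b^2)$ to the bound. I would state this caveat explicitly.

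Finally, substituting $\sigma^2_{\textnormal{MAPO}}$, which is reduced by Propositions~\ref{prop:spatial_variance} and~\ref{prop:signal_variance}, shows that the second term shrinks relative to GRPO.
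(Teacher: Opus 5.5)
Your proposal is correct and follows essentially the same route as the paper. Both arguments use the $L$-smooth ascent lemma, the bound $\mathbb{E}\|g_t\|^2 \le \|\nabla J(\theta_t)\|^2 + \sigma^2$, telescoping against $J(\theta^*)$, and the constant step $\eta = 1/(L\sqrt{T})$ (your $c=1$), arriving at $2L\Delta/\sqrt{T} + \sigma^2/\sqrt{T}$. You are more careful than the paper in two respects: you reconcile the $L$ prefactor on the variance term, which the paper's final bound silently drops, and you flag that unbiasedness is an assumption the paper simply posits despite group normalization and clipping.
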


\paragraph{Remark}
Theorem \ref{thm:convergence} explicitly links the convergence rate to the gradient noise $\sigma_{\text{MAPO}}$. Combining our results from Propositions \ref{prop:spatial_variance} and \ref{prop:signal_variance}, we have established that:
\begin{equation*}
    \sigma_{\text{MAPO}}^2 \approx \underbrace{(1-\rho)\sigma_{base}^2}_{\text{Spatial Reduction}} - \underbrace{\Delta_{\text{sem}}}_{\text{Signal Denoising}} \ll \sigma_{\text{GRPO}}^2
\end{equation*}
This inequality implies that MAPO achieves a tighter error bound than standard GRPO or PPO. By minimizing both spatial and signal variance, MAPO ensures more stable and efficient convergence in the high-dimensional landscape of multimodal agentic reasoning. 

\begin{table*}[t]
\centering
\caption{\textbf{Main Results on real-world fine-grained visual reasoning benchmarks}. The top two outcomes are \textbf{bolded} and \underline{underlined}, respectively. Here, $^*$ denotes results reported in the original paper, and $^{\dagger}$ denotes results reproduced by ourselves through available model weights. }
\label{tab:main_table}
\setlength\tabcolsep{9pt}
\renewcommand{\arraystretch}{1.1}
\begin{tabular}{p{3cm}ccccccc}
\toprule
\multirow{2}[2]{*}{\textbf{Method}} & \multicolumn{3}{c}{\textbf{V*}} & \multicolumn{3}{c}{\textbf{HR-Bench}} & \multirow{2}[2]{*}{\textbf{MME-Real-Lite}} \\
\cmidrule(lr){2-4} \cmidrule(lr){5-7}
& attribute & relative & overall & 4K & 8K & overall & \\
\midrule
GPT-5~\citep{GPT5}  & 70.0 & 73.4 & 71.3 & 74.6 & 72.0 & 73.3 & 55.3 \\
Gemini 2.5 Pro~\citep{gemini25} & 85.5 & 75.2 & 81.4 & 67.3 & 61.4 & 64.3 & 49.2 \\
\midrule
DeepEyes$^{\dagger}$~\citep{arXiv:2025:DeepEyes}  & \underline{89.6} & 85.5  & 88.0	& 74.9	& 71.2 & 73.0 & 48.4 \\
Thyme$^*$~\citep{arXiv:2025:Thyme} & 83.5 & 80.3 & 82.2 & 77.0 & 72.0 & 74.5 & 55.2 \\
Mini-o3$^{\dagger}$~\citep{arXiv:2025:Minio3} & 90.4 & 88.2 & \underline{89.3} & 75.4 & 72.8 & 74.1 & 49.4 \\
DeepEyesv2$^*$~\citep{arXiv:2025:DeepEyesv2} & - & -& 81.8 &77.9 &73.8 &75.9& -   \\
\midrule
\textbf{Ovis2.5-9B}~\citep{Ovis25}  & 81.9 & 81.6 & 81.8 & 74.0 & 68.2 & 71.1 & 46.1  \\
\quad + Coldstart SFT & 73.0 & 84.2 & 77.5 & 74.6 & 71.0 & 72.8 & 47.9 \\
\qquad + PPO & 79.1 & 88.2 & 82.8 & 77.4 & 76.0 & 76.7 & 43.0 \\
\qquad + GRPO~\citep{GRPO}   & 88.0 & 89.5 & 88.6 & \underline{78.5} & \underline{77.0} & \underline{77.8} & \underline{55.5} \\
\qquad + DAPO~\citep{dapo}   & 88.1 & \underline{90.0} & 88.8 & 78.1 & 76.1 & 77.1 & 50.3 \\
\qquad + GSPO~\citep{gspo}    & 87.1 & \textbf{92.0} & 89.1 & 78.3 & 76.5 & 77.4 & 54.9 \\
\rowcolor[RGB]{236,244,252} 
\qquad + MAPO (Ours)   & \textbf{91.1} & 87.2 & \textbf{89.5} & \textbf{81.0} & \textbf{78.6} & \textbf{79.8} & \textbf{55.8} \\

\bottomrule
\end{tabular}
\end{table*}

\section{Experiments}
In this section, we conduct experiments on three benchmarks that require fine-grained visual reasoning capabilities. 

\subsection{Main Results}
\paragraph{Benchmarks.} To evaluate the fine-grained visual reasoning capabilities of the model, we choose three high-resolution benchmarks, including V*~\citep{CVPR:2024:V*}, HR-bench~\citep{AAAI:2025:HRBench}, and MME-Realworld-Lite~\citep{zhang2024mme}. These datasets are characterized by ultra-high-resolution imagery spanning from 2K to 8K, where the questions hinge on minute visual details. This configuration imposes a rigorous test on the model's ability to precisely localize small targets embedded within a vast visual context, thus reflecting the advantages
of the thinking with images paradigm. We present our results in Avg@8 for V* and HR-bench, and Avg@1 for MME-Realworld-Lite. 

\paragraph{Baselines.} Our approach is built upon Ovis2.5-9B~\citep{Ovis25}, and employs open-source datasets~\citep{arXiv:2025:DRIM} to incentivize ``thinking with images'' capability. We compare our method against three types of baselines, including (i) frontier closed-source models: GPT-5~\citep{GPT5} and Gemini 2.5 Pro~\citep{gemini25}; (ii) open-source MLLMs that can think with images: DeepEyes~\citep{arXiv:2025:DeepEyes}, Thyme~\citep{arXiv:2025:Thyme}, Mini-o3~\citep{arXiv:2025:Minio3} and DeepEyesV2~\citep{arXiv:2025:DeepEyesv2}; and (iii) commonly-used RL methods: PPO~\citep{PPO}, GRPO~\citep{GRPO}, DAPO~\citep{dapo} and GSPO~\citep{gspo}. 

\paragraph{Results.} The quantitative results are summarized in Table~\ref{tab:main_table}. Our proposed MAPO demonstrates consistent superiority across all evaluated benchmarks, establishing a new state-of-the-art for open-source agentic MLLMs. 
For open-source multimodal agentic baselines, MAPO consistently exceeds their performance outperforming the previous state-of-the-art Mini-o3 on HR-Bench overall and surpassing DeepEyes across all metrics.  
When benchmarked against other RL algorithms on the same Ovis2.5-9B backbone, MAPO exhibits robust improvements over standard baselines, achieving a significant improvement over the strong GRPO baseline on HR-Bench overall (79.8\% vs. 77.8\%) and maintaining a clear lead on the challenging 8K subset (78.6\% vs. 77.0\%). These results confirm that our method effectively guides the model towards more precise visual exploration compared to purely outcome-based RL. 

\subsection{Key Findings}
In this subsection, we delve into the internal mechanisms of MAPO to validate the effectiveness compared to other methods regarding semantic alignment and training stability.

\begin{figure*}[t]
\centering
\includegraphics[width=.95\textwidth]{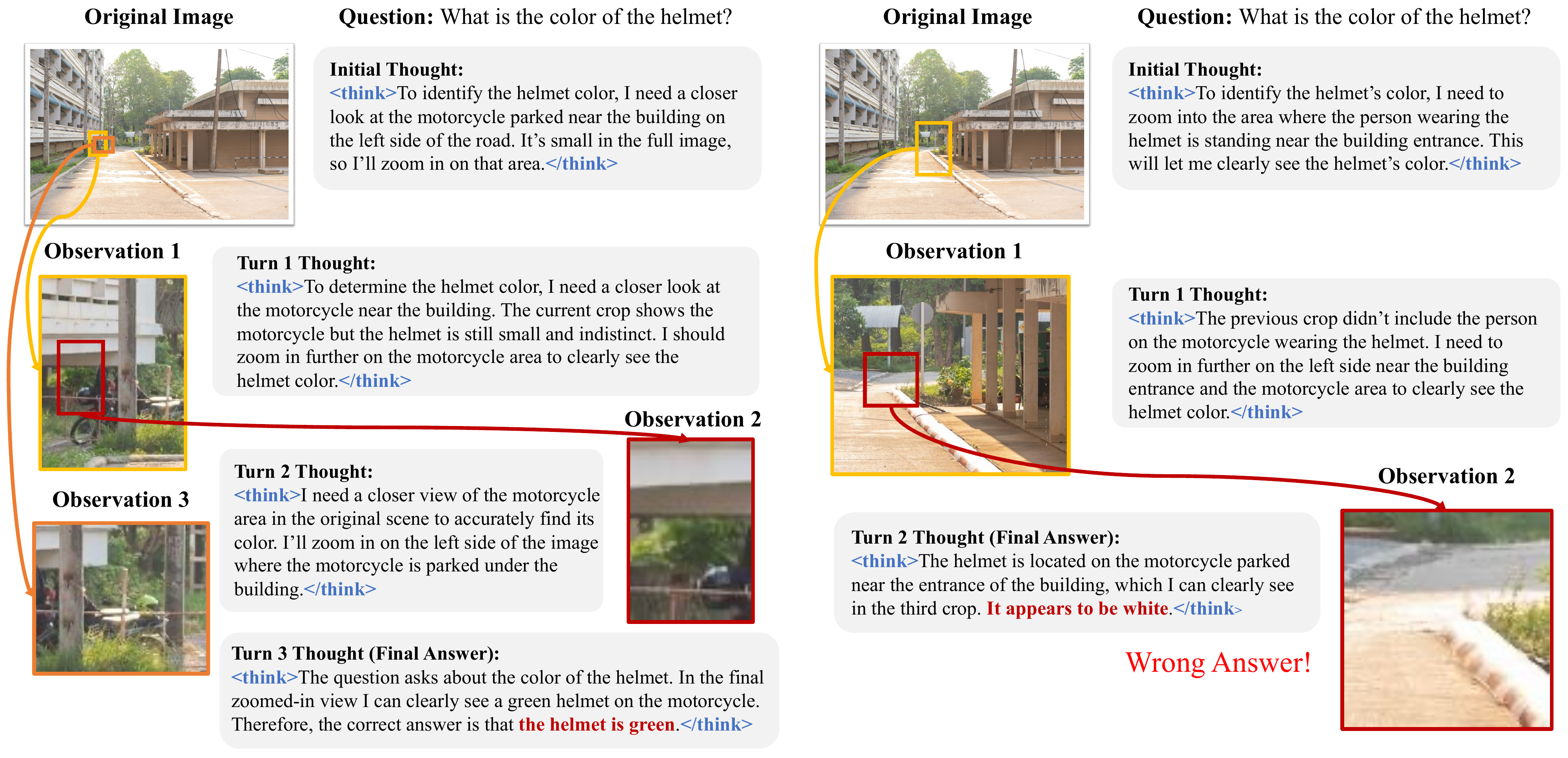}
\caption{\textbf{Visualization of Multimodal Reasoning Trajectories.} We visualize the intermediate reasoning steps of MAPO (\textbf{left}) and GRPO (\textbf{right}) for the same query. While GRPO fails to execute the visual action implied by its text, MAPO successfully aligns the visual execution of MLLM agent with its textual reasoning, effectively bridging the reasoning-action gap.}
\label{fig:walk}
\end{figure*}

\begin{figure}[t] 
    \centering
    \centering
    \includegraphics[width=.5\linewidth]{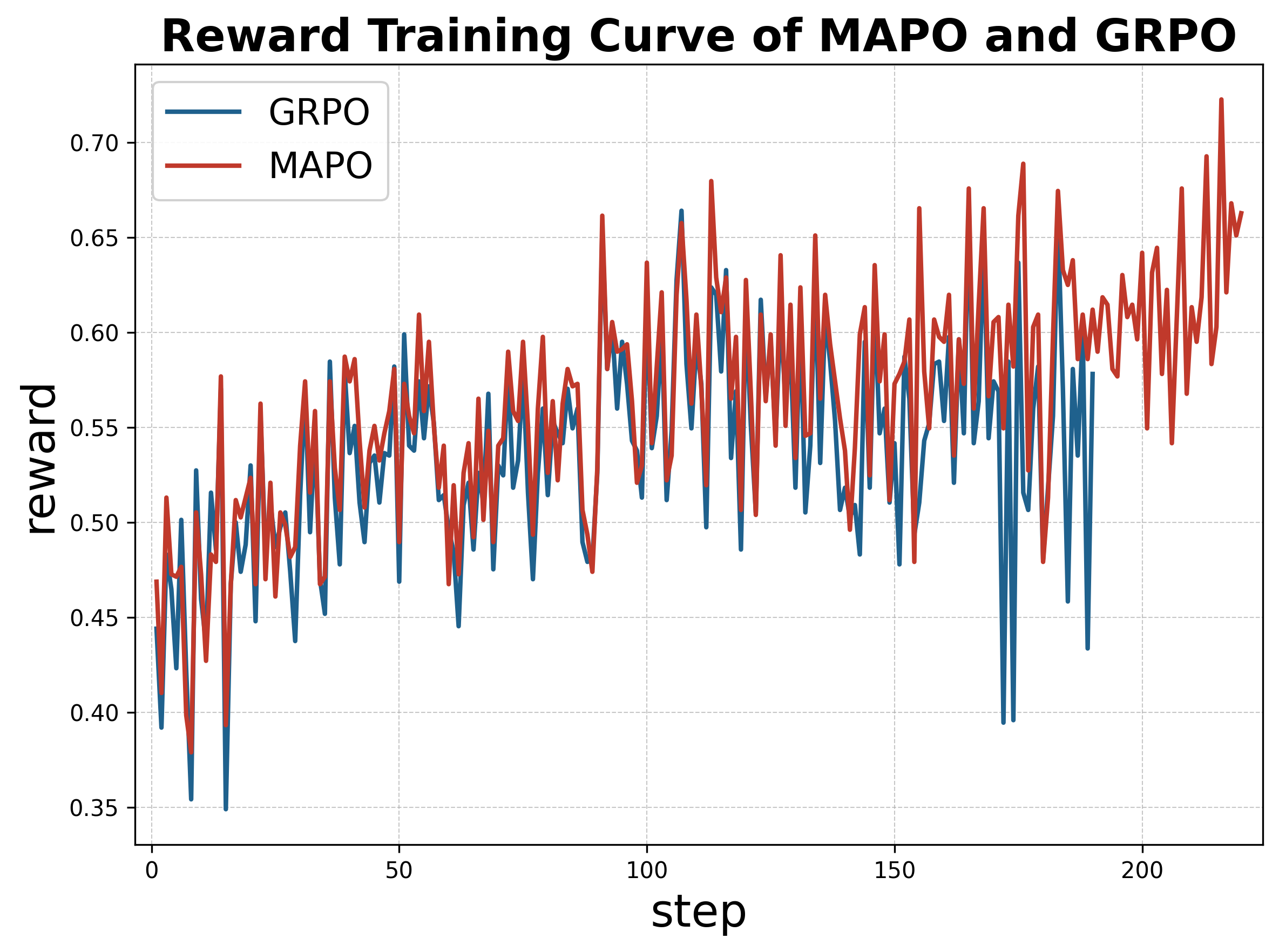} 
    \caption{\textbf{Reward Curves of MAPO and GRPO.} Unlike GRPO, which degrades in later stages, MAPO maintains stable learning, validating its scalability for multimodal agents large-scale training .}
    \label{fig:scalability}
\end{figure}

\paragraph{Visualization of Agentic Trajectories.}
To intuitively understand how MAPO bridges the reasoning-action gap, we visualize the intermediate reasoning and execution trajectories in Figure~\ref{fig:walk}. 
Consider the case where the agent is tasked to identify the color of a small ``helmet''. The baseline GRPO (right) generates a plausible textual plan (``I need to zoom in on the motorcycle...''), but its actual visual action erroneously focuses on an irrelevant region rather than the motorcycle. This misalignment leads to a hallucinated answer (``white'') that contradicts the visual reality.
Conversely, MAPO (left) demonstrates strict semantic consistency. The agent explicitly generates a descriptive label for its intended view, and the CLIP-based feedback guides the policy to execute a precise zoom-in action on the motorcycle. The alignment between the textual reasoning and visual action enables agent to correctly identify the correct answer. 
This visualization underscores that for MLLMs to achieve robust agency, achieving cognitive consistency between "knowing" (textual reasoning) and "doing" (visual action) is imperative, and MAPO effectively grounds the reasoning process in valid visual observations.

\paragraph{Scalability of Agentic Training.} 
One of our motivation for MAPO is addressing the instability inherent in training multimodal agents with sparse rewards. Figure~\ref{fig:scalability} visualizes the reward curves of MAPO compared to the strong baseline GRPO throughout the training phases. 
As observed, GRPO suffers from severe performance oscillation and eventual collapse during the later stages of long-horizon optimization. We observe that at this stage, the model fails to correctly invoke tools, and its performance on benchmarks suffers a catastrophic drop. In contrast, MAPO maintains a robust upward reward curve, demonstrating superior scalability in training large-scale agents in open-ended environments. This phenomenon aligns with our theoretical analysis in Section~\ref{sec:theory}, attributing the failure to the high variance induced by the reasoning-action gap. In contrast, MAPO maintains a robust upward trajectory, yielding continuous performance gains even over extended training steps. By effectively reducing both spatial sampling variance (via group aggregation) and signal variance (via semantic scoring), MAPO demonstrates superior scalability, making it suitable for training large-scale agents in complex, open-ended environments.

\section{Conclusion}
In this paper, we identify the reasoning-action gap as a fundamental obstacle in incentivize MLLMs to ``think with images''. To bridge this gap, we introduce \textbf{M}ultimodal \textbf{A}gentic \textbf{P}olicy \textbf{O}ptimization (MAPO), a novel simple yet but effective framework that enforces semantic consistency between the textual reasoning and visual action. 
By embedding a stealthy, verify-then-reinforce supervision loop, MAPO effectively transforms implicit thoughts into verifiable actions.
Theoretically, we prove that our approach operates as a dual variance reduction mechanism, mitigating both sampling noise and reward sparsity. 
Empirically, MAPO achieves state-of-the-art performance across multiple high-resolution visual reasoning benchmarks. Crucially, it exhibits remarkable stability and scalability compared to standard RL methods, preventing model collapse during long-horizon training. 
We believe MAPO offers a versatile paradigm for grounding agentic reasoning, paving the way for more reliable and autonomous multimodal systems.

\bibliography{conference}
\bibliographystyle{unsrt}

\newpage

\appendix
\begin{center}
    \fontsize{15.5pt}{15.5pt}\selectfont
    \textbf{
        Appendix of ``Walk the Talk: Bridging the Reasoning-Action Gap for Thinking with Images via Multimodal Agentic Policy Optimization''
    }
\end{center}
\let\cleardoublepage\clearpage
\vspace{4pt}

\section{Theoretical Proofs}
\label{sec:appendix_proofs}
In this section, we provide rigorous proofs for the propositions and theorems presented in Section~\ref{sec:theory}. 

\subsection{Proof of Proposition \ref{prop:spatial_variance}}
Let $r_1, \dots, r_G$ be the rewards for $G$ outputs generated from the same prompt $q$. We model these rewards as identically distributed random variables with variance $\text{Var}(r_i) = \sigma^2$.
The advantage estimate for the $i$-th sample in the group-based method  is given by:
\begin{equation*}
    \hat{A}_i = r_i - \bar{r}
\end{equation*}
where $\bar{r} = \frac{1}{G} \sum_{j=1}^G r_j$ is the group mean.

The variance of this advantage estimator is:
\begin{equation}
    \text{Var}(\hat{A}_i) = \text{Var}\left( r_i - \frac{1}{G} \sum_{j=1}^G r_j \right)
\end{equation}
We expand the summation term by separating $r_i$ from the rest of the group:
\begin{equation}
    \hat{A}_i = r_i - \frac{1}{G} r_i - \frac{1}{G} \sum_{j \neq i} r_j = \left(1 - \frac{1}{G}\right) r_i - \frac{1}{G} \sum_{j \neq i} r_j
\end{equation}
Let the covariance between any two distinct rewards in the group be $\text{Cov}(r_i, r_j) = \rho \sigma^2$ for all $i \neq j$, where $\rho$ is the intra-group correlation coefficient.
Using the property of variance for a linear combination of random variables, $\text{Var}(\sum a_k X_k) = \sum a_k^2 \text{Var}(X_k) + \sum_{k \neq l} a_k a_l \text{Cov}(X_k, X_l)$, we calculate:

\begin{align}
    \text{Var}(\hat{A}_i) &= \left(1 - \frac{1}{G}\right)^2 \sigma^2 + \sum_{j \neq i} \left(-\frac{1}{G}\right)^2 \sigma^2 \\
    &\quad + \sum_{j \neq i} 2 \left(1 - \frac{1}{G}\right)\left(-\frac{1}{G}\right) \text{Cov}(r_i, r_j) \\
    &\quad + \sum_{j \neq k, j,k \neq i} \left(-\frac{1}{G}\right)\left(-\frac{1}{G}\right) \text{Cov}(r_j, r_k)
\end{align}

Substituting $\text{Cov}(r_i, r_j) = \rho \sigma^2$ and counting the terms (there are $G-1$ terms in the first sum, $G-1$ covariance pairs involving $i$, and $(G-1)(G-2)$ covariance pairs not involving $i$):

\begin{align}
    \text{Var}(\hat{A}_i) &= \frac{(G-1)^2}{G^2} \sigma^2 + \frac{G-1}{G^2} \sigma^2 \\
    &\quad - \frac{2(G-1)^2}{G^2} \rho \sigma^2 + \frac{(G-1)(G-2)}{G^2} \rho \sigma^2
\end{align}

Simplifying the $\sigma^2$ terms:
\begin{equation}
    \frac{(G-1)^2 + (G-1)}{G^2} \sigma^2 = \frac{(G-1)(G-1+1)}{G^2} \sigma^2 = \frac{G-1}{G} \sigma^2
\end{equation}

Simplifying the $\rho \sigma^2$ terms:
\begin{align}
    \frac{-2(G-1)^2 + (G-1)(G-2)}{G^2} \rho \sigma^2 &= \frac{(G-1) [ -2(G-1) + (G-2) ]}{G^2} \rho \sigma^2 \\
    &= \frac{(G-1) [ -2G + 2 + G - 2 ]}{G^2} \rho \sigma^2 \\
    &= \frac{(G-1)(-G)}{G^2} \rho \sigma^2 = -\frac{G-1}{G} \rho \sigma^2
\end{align}

Combining these results:
\begin{equation}
    \text{Var}(\hat{A}_i) = \frac{G-1}{G} \sigma^2 - \frac{G-1}{G} \rho \sigma^2 = \frac{G-1}{G} (1 - \rho) \sigma^2
\end{equation}

The standard gradient estimator without baseline has variance proportional to $\sigma^2$. The group-based estimator scales this by $(1-\rho)\frac{G-1}{G}$.
Since all outputs share the exact same prompt and visual context, $\rho$ is typically high (positive correlation).
As $G$ becomes sufficiently large, $\frac{G-1}{G} \approx 1$, yielding:
\begin{equation}
    \text{Var}(\hat{g}_{\text{group}}) \approx (1 - \rho) \cdot \text{Var}(\hat{g}_{\text{std}})
\end{equation}
This confirms that the group baseline acts as a control variate, effectively filtering out the variance associated with the prompt difficulty (captured by the shared correlation $\rho$).

\subsection{Proof of Proposition \ref{prop:signal_variance}}
Let $\hat{g}$ be the gradient estimator for a trajectory $\tau$. We compare two estimators: $\hat{g}_{\text{out}}$ (using only sparse outcome reward $r_{\text{out}}$) and $\hat{g}_{\text{sem}}$ (using dense semantic reward $r_{\text{sem}}$).
We model the observed rewards as the true reasoning quality $Q^*(x,z)$ corrupted by noise:
\begin{align}
    r_{\text{out}} &= Q^*(x,z) + \epsilon_{\text{out}}, \quad \epsilon_{\text{out}} \sim \mathcal{N}(0, \sigma_{\text{out}}^2) \\
    r_{\text{sem}} &= Q^*(x,z) + \epsilon_{\text{sem}}, \quad \epsilon_{\text{sem}} \sim \mathcal{N}(0, \sigma_{\text{sem}}^2)
\end{align}
Since $r_{\text{out}}$ is binary (0 or 1) and sparse, its ``noise'' variance $\sigma_{\text{out}}^2$ relative to the continuous reasoning process is high (it treats near-correct and completely wrong reasoning identically if the final answer is wrong). Conversely, $r_{\text{sem}}$ is a continuous, dense signal grounded in the image, so we assume $\sigma_{\text{sem}}^2 < \sigma_{\text{out}}^2$.

The gradient estimator for a fixed trajectory $\tau$ is $\hat{g} = \nabla_\theta \log \pi(\tau) \cdot r$.
Using the Law of Total Variance conditioned on the trajectory $\tau$:
\begin{equation}
    \text{Var}(\hat{g}) = \text{Var}_\tau \left( \mathbb{E}_r [\hat{g} | \tau] \right) + \mathbb{E}_\tau \left[ \text{Var}_r (\hat{g} | \tau) \right]
\end{equation}

\textbf{1. Variance due to Policy Sampling (First Term):}
The term $\mathbb{E}_r [\hat{g} | \tau] = \nabla_\theta \log \pi(\tau) \cdot \mathbb{E}[r|\tau]$ represents the expected gradient direction for a specific trajectory.
Assuming the semantic reward $r_{\text{sem}}$ is calibrated to align with the task objective (i.e., it correlates positively with $r_{\text{out}}$), the expected gradients for both methods point in similar directions: $\mathbb{E}[r_{\text{out}}|\tau] \propto \mathbb{E}[r_{\text{sem}}|\tau]$. Thus, the variance induced by trajectory sampling $\text{Var}_\tau ( \mathbb{E}_r [\hat{g} | \tau] )$ is dominated by the policy's exploration $\pi_\theta$ rather than the reward formulation, making this term comparable for both estimators.

\textbf{2. Variance due to Reward Noise (Second Term):}
The second term, $\mathbb{E}_\tau [ \text{Var}_r (\hat{g} | \tau) ]$, represents the variance due to reward signal noise for a \textit{fixed} trajectory.
\begin{align}
    \text{Var}_r (\hat{g}_{\text{out}} | \tau) &= \| \nabla_\theta \log \pi(\tau) \|^2 \cdot \text{Var}(r_{\text{out}}) = C_\tau \cdot \sigma_{\text{out}}^2 \\
    \text{Var}_r (\hat{g}_{\text{sem}} | \tau) &= \| \nabla_\theta \log \pi(\tau) \|^2 \cdot \text{Var}(r_{\text{sem}}) = C_\tau \cdot \sigma_{\text{sem}}^2
\end{align}
where $C_\tau = \| \nabla_\theta \log \pi(\tau) \|^2$.

Since $\sigma_{\text{sem}}^2 < \sigma_{\text{out}}^2$, it follows that:
\begin{equation}
    \mathbb{E}_\tau [\text{Var}_r(\hat{g}_{\text{sem}} | \tau)] < \mathbb{E}_\tau [\text{Var}_r(\hat{g}_{\text{out}} | \tau)]
\end{equation}

Therefore, the total variance of the MAPO gradient estimator using semantic scoring is strictly lower:
\begin{equation}
    \text{Var}(\hat{g}_{\text{sem}}) < \text{Var}(\hat{g}_{\text{out}})
\end{equation}
This proves that incorporating dense semantic supervision acts as a variance reduction technique for the reward signal.

\paragraph{Remark}
The reduction in variance indicates that $\hat{g}_{\text{sem}}$ serves as a lower-variance proxy for the true policy gradient. In the context of stochastic optimization, lower gradient variance ($\sigma_{\text{sem}}^2 < \sigma_{\text{out}}^2$) directly translates to a tighter bound on the convergence rate (as discussed in Theorem \ref{thm:convergence}). Consequently, MAPO allows for larger effective step sizes and reduces the likelihood of the policy collapsing due to noisy reward spikes during the exploration of the vast multimodal reasoning space.

\subsection{Proof of Theorem \ref{thm:convergence}}
We adopt standard analysis for stochastic gradient ascent on non-convex $L$-smooth functions. 

\textbf{Assumptions:}
\begin{enumerate}
    \item The objective function $J(\theta)$ is $L$-smooth.
    \item The gradient estimator $\hat{g}$ is unbiased: $\mathbb{E}[\hat{g}] = \nabla J(\theta)$.
    \item The gradient estimator has bounded variance: $\text{Var}(\hat{g}) \le \sigma^2$.
\end{enumerate}

The update rule is $\theta_{t+1} = \theta_t + \eta_t \hat{g}_t$ (Gradient Ascent).
From the $L$-smoothness property (Ascent Lemma):
\begin{equation}
    J(\theta_{t+1}) \ge J(\theta_t) + \langle \nabla J(\theta_t), \theta_{t+1} - \theta_t \rangle - \frac{L}{2} \|\theta_{t+1} - \theta_t\|^2
\end{equation}
Substituting the update rule $\theta_{t+1} - \theta_t = \eta_t \hat{g}_t$:
\begin{equation}
    J(\theta_{t+1}) \ge J(\theta_t) + \eta_t \langle \nabla J(\theta_t), \hat{g}_t \rangle - \frac{L \eta_t^2}{2} \|\hat{g}_t\|^2
\end{equation}
Taking expectations with respect to the noise at iteration $t$:
\begin{align}
    \mathbb{E}[J(\theta_{t+1})] &\ge J(\theta_t) + \eta_t \|\nabla J(\theta_t)\|^2 - \frac{L \eta_t^2}{2} \mathbb{E}[\|\hat{g}_t\|^2] \\
    &= J(\theta_t) + \eta_t \|\nabla J(\theta_t)\|^2 - \frac{L \eta_t^2}{2} \left( \|\nabla J(\theta_t)\|^2 + \sigma^2 \right) \\
    &= J(\theta_t) + \left(\eta_t - \frac{L \eta_t^2}{2}\right) \|\nabla J(\theta_t)\|^2 - \frac{L \eta_t^2}{2} \sigma^2
\end{align}
Rearranging to bound the gradient norm:
\begin{equation}
    \left(\eta_t - \frac{L \eta_t^2}{2}\right) \|\nabla J(\theta_t)\|^2 \le \mathbb{E}[J(\theta_{t+1})] - J(\theta_t) + \frac{L \eta_t^2}{2} \sigma^2
\end{equation}
We set the step size $\eta_t = \frac{1}{L\sqrt{T}}$. For sufficiently large $T$, $\frac{L \eta_t}{2} \le \frac{1}{2}$, thus $\eta_t - \frac{L \eta_t^2}{2} \ge \frac{\eta_t}{2}$. Summing over $t=0, \dots, T-1$ and taking total expectation:
\begin{equation}
    \frac{\eta_t}{2} \sum_{t=0}^{T-1} \mathbb{E}[\|\nabla J(\theta_t)\|^2] \le J(\theta^*) - J(\theta_0) + \frac{T L \eta_t^2}{2} \sigma^2
\end{equation}
Let $\Delta = J(\theta^*) - J(\theta_0)$. Substituting $\eta_t = \frac{1}{L\sqrt{T}}$:
\begin{equation}
    \frac{1}{2L\sqrt{T}} \sum_{t=0}^{T-1} \mathbb{E}[\|\nabla J(\theta_t)\|^2] \le \Delta + \frac{T L}{2} \left(\frac{1}{L\sqrt{T}}\right)^2 \sigma^2 = \Delta + \frac{\sigma^2}{2L}
\end{equation}
Dividing by $\frac{T}{2L\sqrt{T}} = \frac{\sqrt{T}}{2L}$:
\begin{equation}
    \frac{1}{T} \sum_{t=0}^{T-1} \mathbb{E}[\|\nabla J(\theta_t)\|^2] \le \frac{2L\Delta}{\sqrt{T}} + \frac{\sigma^2}{\sqrt{T}}
\end{equation}
This yields the bound in Theorem \ref{thm:convergence} (where $\sigma = \sigma_{\text{MAPO}}$ implies $\sigma^2$ variance terms):
\begin{equation}
    \min_{t} \mathbb{E}\left[ \| \nabla J(\theta_t) \|^2 \right] \le \frac{C_1 \cdot L \Delta}{\sqrt{T}} + \frac{C_2 \cdot \sigma^2}{\sqrt{T}}
\end{equation}
Since we established in Propositions~\ref{prop:spatial_variance} and~\ref{prop:signal_variance} that $\sigma_{\text{MAPO}}^2 \ll \sigma_{\text{GRPO}}^2$ (due to both spatial correlation $\rho$ and signal denoising via $r_{\text{sem}}$), the constant factor in the convergence bound is significantly reduced for MAPO. This implies that for a fixed number of iterations $T$, MAPO is guaranteed to reach a stationary point with smaller gradient norm (i.e., closer to a local optimum) compared to standard methods.

\section{Prompt Template}
In this section, we provide the system prompt  used during training and evaluation. The proposed template mainly follows prior works~\citep{arXiv:2025:DeepEyes,arXiv:2025:DRIM}. 

\begin{promptblock}{SYSTEM PROMPT}
\begin{PromptVerb}
You are a helpful assistant.

# Context
In each turn, new images might be provided as a result of your tool calls. The images are numbered sequentially starting from 1. You can refer to any image that has appeared so far in the conversation using its `image_idx`.

# Tools
You may call one or more functions to assist with the user query.
You are provided with function signatures within <tools></tools> XML tags:
<tools>
{
  "type":"function",
  "function":{
    "name":"image_zoom_in_tool",
    "description":"Zoom in on a specific region of an image by cropping it. The new cropped image will be available in the next turn.",
    "parameters":{
      "type":"object",
      "properties":{
        "image_idx":{
          "type":"integer",
          "description":"The 1-based index of the image to perform the zoom-in operation on. The available images are provided and numbered in the user's prompt."
        },
        "bbox_2d":{
          "type":"string",
          "description":"The bounding box of the region to zoom in, as a string '<box>(x1,y1),(x2,y2)</box>' in relative coordinates (0.0 to 1.0) for the selected image, where (x1, y1) is the top-left corner and (x2, y2) is the bottom-right corner."
        },
        "label":{
          "type":"string",
          "description":"The name or label of the object in the specified bounding box."
        }
      },
      "required":["image_idx","bbox_2d", "label"]
    }
  }
}
</tools>

# How to call a tool
Return a json object with function name and arguments within <tool_call></tool_call> XML tags:
<tool_call>
{"name": <function-name>, "arguments": <args-json-object>}
</tool_call>
\end{PromptVerb}
\end{promptblock}

\paragraph{Remark} We emphasize that while prior works~\citep{arXiv:2025:DeepEyes,arXiv:2025:DRIM} also include a label field in the tool arguments, they typically treat it as an \textit{optional} or loosely constrained parameter. Under such relaxed conditions, models tend to degenerate into generating highly abstract and uninformative labels (e.g., generic terms like person or object), which fail to reflect specific visual intent. In contrast, we strictly enforce a \textbf{mandatory descriptive constraint} via the system prompt, compelling the model to generate fine-grained attributes (e.g., ``the man in the red shirt''). This specificity is a prerequisite for our CLIP-based semantic scoring to function effectively as a verifier.

\section{Training Details} \label{app:train}
\subsection{Implementation Framework}
We implement MAPO based on the verl framework~\citep{verl}, utilizing its HybridFlow architecture to decouple generation and training. For the rollout phase, we employ vLLM to accelerate inference with the vllm engine backend. The model training utilizes Fully Sharded Data Parallel (FSDP) with optimizer state offloading to manage memory efficiency. We enable gradient checkpointing and use fused kernels (Triton backend) to further optimize computational throughput.

\subsection{Hyperparameters}
We utilize the Ovis2.5-9B model~\citep{Ovis25} as our policy backbone. The training is conducted with a global batch size of 96. We employ the AdamW optimizer with a cosine learning rate decay schedule, starting from a peak learning rate of $1\times 10^{-6}$ after a 2\% warmup period.
The input image resolution is dynamically handled, with a pixel constraint range of $[10^6, 2\times 10^6]$ pixels to support high-resolution visual reasoning.

Regarding the agentic settings, we allow a maximum of 6 interaction turns per query. The specific hyperparameters for the our proposed MAPO algorithm  are detailed in Table~\ref{tab:hyperparams}.

\begin{table}[h]
\centering
\caption{Hyperparameter settings for MAPO training.}
\label{tab:hyperparams}
\begin{tabular}{l|c}
\toprule
\textbf{Hyperparameter} & \textbf{Value} \\
\midrule
\multicolumn{2}{c}{\textit{Optimization}} \\
\midrule
Global Batch Size & 96 \\
Mini Batch Size & 96 \\
Micro Batch Size (per GPU) & 4 \\
Optimizer & AdamW \\
Peak Learning Rate & 1e-6 \\
LR Scheduler & Cosine \\
Warmup Ratio & 0.02 \\
Weight Decay & 0.0 \\
Gradient Clipping & 1.0 \\
\midrule
\multicolumn{2}{c}{\textit{Algorithm}} \\
\midrule
Group Size ($G$) & 8 \\
Clip Ratio ($\epsilon$) & 0.2 \\
KL Coefficient ($\beta_{KL}$) & 0.0 \\
Entropy Coefficient & 0.0 \\
Advantage Estimator & MAPO \\
Semantic Reward Weight ($\beta$) & 0.4 \\
\midrule
\multicolumn{2}{c}{\textit{Generation \& Environment}} \\
\midrule
Sampling Temperature & 0.9 \\
Top-p & 0.99 \\
Top-k & 40 \\
Max Prompt Length & 4500 \\
Max Response Length & 6000 \\
Single Response Limit & 1024 tokens \\
Max Interaction Turns & 6 \\
Min Image Pixels & $1,048,576$ \\
Max Image Pixels & $2,000,000$ \\
\bottomrule
\end{tabular}
\end{table}

\end{document}

%% file: math_commands.tex
\usepackage{amsmath,amsfonts,bm}

\def\eqref#1{equation~\ref{#1}}

\def\1{\bm{1}}

\DeclareMathAlphabet{\mathsfit}{\encodingdefault}{\sfdefault}{m}{sl}
\SetMathAlphabet{\mathsfit}{bold}{\encodingdefault}{\sfdefault}{bx}{n}



%% file: macro-math.tex
\allowdisplaybreaks

\theoremstyle{plain}
\newtheorem{theorem}{Theorem}
\newtheorem{proposition}{Proposition}

\theoremstyle{definition}

\theoremstyle{remark}

